\pdfoutput=1

\documentclass[letterpaper, 10pt, conference, dvipsnames]{ieeeconf}  %
\IEEEoverridecommandlockouts                              %

\usepackage[basicpaper,lengthhacks]{tinyiu}
\linepenalty=1000
\addtolength{\dblfloatsep}{-50mm}
\addtolength{\textfloatsep}{-3mm}

\usepackage{capt-of}%
\usepackage[boxed,ruled,vlined,linesnumbered]{algorithm2e}
\DontPrintSemicolon
\SetKwProg{Fn}{function}{}{}
\SetKwFunction{FnGetAction}{GetAction}
\SetKwFunction{FnDelegatePolicy}{DelegatePolicy}
\SetKwFunction{FnRestartArm}{RestartArm}
\SetKwFunction{FnStep}{Step}
\SetKwComment{Comment}{$\triangleright$\ }{}
\SetKwInput{KwInit}{Initialise}

\SetKwFunction{FnPlanTrajectory}{PlanTrajectory}

\usepackage{etoolbox}
\makeatletter
\patchcmd{\@algocf@start}%
  {-1.5em}%
  {0pt}%
  {}{}%
\makeatother

\setlength{\algomargin}{1em}

\usepackage{wrapfig}
\usepackage{minibox}

\usepackage[inline]{enumitem}

\usepackage{subcaption}

\usepackage{booktabs}
\usepackage{multirow}

\usepackage{amssymb}                                        %
\usepackage{mathtools}
\usepackage{braket}                                         %

\usepackage{graphicx}                                       %
\usepackage{tikz}

\usepackage{bm}
\usepackage{dsfont}
\usepackage{xifthen}
\usepackage{stackengine} %
\usepackage{url}

\usepackage{cleveref}                                        %
\crefname{assumption}{assumption}{assumptions}
\crefname{problem}{problem}{problems}
\crefname{algorithm}{Alg.}{Algs.}
\Crefname{algorithm}{Algorithm}{Algorithms}
\crefname{figure}{Figure}{Figs.} %
\crefformat{equation}{(#2#1#3)}
\crefrangeformat{equation}{(#3#1#4) to~(#5#2#6)}
\crefmultiformat{equation}{(#2#1#3)}%
{ and~(#2#1#3)}{, (#2#1#3)}{ and~(#2#1#3)}

\crefformat{footnote}{#2\footnotemark[#1]#3}

\usepackage[boxed,ruled,vlined,linesnumbered]{algorithm2e}
\DontPrintSemicolon
\SetKwProg{Fn}{function}{}{}
\SetKwFunction{FnSampleFree}{SampleFree}
\SetKwFunction{FnRestartArm}{RestartArm}
\SetKwFunction{FnPickArm}{PickArm}
\SetKwFunction{FnRewire}{Rewire}
\SetKwFunction{FnNearest}{Nearest}
\SetKwFunction{FnRestartArm}{RestartArm}
\SetKwComment{Comment}{$\triangleright$\ }{}
\SetKwInput{KwInit}{Initialise}

\SetCommentSty{commentfont}

\definecolor{amethyst}{rgb}{0.6, 0.4, 0.8}
\definecolor{alizarin}{rgb}{0.82, 0.1, 0.26}
\definecolor{ashgrey}{rgb}{0.43, 0.5, 0.5}
\definecolor{yellow}{rgb}{1.0, 0.75, 0.0} %

\newcommand*{\norm}[1]{\lVert#1\rVert}

\makeatletter
\newcommand{\overrightsmallarrow}{\mathpalette{\overarrowsmall@\rightarrowfill@}}
\newcommand{\overarrowsmall@}[3]{%
  \vbox{%
    \ialign{%
      ##\crcr
      #1{\smaller@style{#2}}\crcr
      \noalign{\nointerlineskip}%
      $\m@th\hfil#2#3\hfil$\crcr
    }%
  }%
}
\def\smaller@style#1{%
  \ifx#1\displaystyle\scriptstyle\else
    \ifx#1\textstyle\scriptstyle\else
      \scriptscriptstyle
    \fi
  \fi
}
\makeatother

\mathchardef\ordinarycolon\mathcode`\:
\mathcode`\:=\string"8000
\begingroup \catcode`\:=\active
  \gdef:{\mathrel{\mathop\ordinarycolon}}
\endgroup

\usepackage{mathtools}

\makeatletter
\newcommand{\@givenstar}{\;\middle|\;}
\newcommand{\@givennostar}[1][]{\;#1|\;}
\newcommand{\given}{\@ifstar\@givenstar\@givennostar}
\makeatother

\newcommand{\dist}[1]{{\parallel #1 \parallel}}

\newcommand*\volof[1]{\mu^{#1}}

\DeclareFontFamily{U} {MnSymbolA}{}
\DeclareFontShape{U}{MnSymbolA}{m}{n}{
  <-6> MnSymbolA5
  <6-7> MnSymbolA6
  <7-8> MnSymbolA7
  <8-9> MnSymbolA8
  <9-10> MnSymbolA9
  <10-12> MnSymbolA10
  <12-> MnSymbolA12}{}
\DeclareFontShape{U}{MnSymbolA}{b}{n}{
  <-6> MnSymbolA-Bold5
  <6-7> MnSymbolA-Bold6
  <7-8> MnSymbolA-Bold7
  <8-9> MnSymbolA-Bold8
  <9-10> MnSymbolA-Bold9
  <10-12> MnSymbolA-Bold10
  <12-> MnSymbolA-Bold12}{}
\DeclareSymbolFont{MnSyA} {U} {MnSymbolA}{m}{n}
\DeclareMathSymbol{\rhookrightarrow}{\mathrel}{MnSyA}{48}

\newcommand{\LTR}{\textsc{ltr}{*}\xspace}
\newcommand{\RRT}{\textsc{rrt}{*}\xspace}
\newcommand{\RRTC}{\textsc{rrt}-{\small Connect}{*}\xspace}
\newcommand{\LazyPRM}{{\small Lazy}-\textsc{prm}{*}\xspace}
\newcommand{\Cspace}{\emph{C-space}\xspace}

\usepackage[
    backend=biber
  ,giveninits=true                  %
  ,url=false, isbn=false, doi=false %
  ,maxnames=99                       %
  ,minnames=3                       %
  ,sorting=none             %
  ,date=year                %
  ,style=ieee
  ]{biblatex}
\AtEveryBibitem{                      %
  \iffieldequalstr{eprinttype}{jstor}
  {\clearfield{eprint}}
  {}
  \clearfield{urlyear}
  \clearfield{urlmonth}
  \clearfield{url}
}

\renewrobustcmd*{\bibinitdelim}{\,} %
\addbibresource{ref.bib}

\title{Rapid Replanning in Consecutive Pick-and-Place Tasks with Lazy Experience Graph}

\author{Tin Lai$^{\dagger,*}$%
 and Fabio Ramos$^{\dagger,\mathsection}$%
\thanks{
$^{*}$Correspondence to {\tt\small tin.lai@sydney.edu.au}\newline
$^{\dagger}$School of Computer Science, The University of Sydney, Australia.
$^{\mathsection}$NVIDIA, USA.
}%
}

\begin{document}

\maketitle
\thispagestyle{empty}
\pagestyle{empty}

\begin{abstract}
  In an environment where a manipulator needs to execute multiple consecutive tasks%
    , the act of object manoeuvre will change the underlying configuration space, affecting all subsequent tasks.
    Previously free configurations might now be occupied by the manoeuvred objects, and previously occupied space might now open up new paths.
    We propose Lazy Tree-based Replanner (\LTR)---a novel hybrid planner that inherits the rapid planning nature of existing anytime incremental sampling-based planners. At the same time, it allows subsequent tasks to leverage prior experience via a lazy experience graph.
    Previous experience is summarised in a lazy graph structure, and \LTR is formulated to be robust and beneficial regardless of the extent of changes in the workspace.
    Our hybrid approach attains a faster speed in obtaining an initial solution than existing roadmap-based planners and often with a lower cost in trajectory length.
    Subsequent tasks can utilise the lazy experience graph to speed up finding a solution and take advantage of the optimised graph to minimise the cost objective.
    We provide proofs of probabilistic completeness and almost-surely asymptotic optimal guarantees.
    Experimentally, we show that in repeated pick-and-place tasks, \LTR attains a high gain in performance when planning for subsequent tasks.
\end{abstract}

\section{Introduction}
Common robotics applications involve the use of manipulator to manoeuvre objects from some initial locations to target locations~\autocite{lozano-perez1989_TaskPlan}.
Typically industrial robots in automated manufacturing systems~\autocite{perumaal2013_AutoTraj} have tasks composed of reaching, grasping and placing motions, coupled with changing collision geometry during execution due to the attached and moved object~\autocite{yang2018_PlanTime}.
Executing a task will likely
modifies the underlying Configuration Space (\Cspace), which refers to the set of all possible robot configurations~\autocite{elbanhawi2014_SampRobo}.
The modified \Cspace renders its previous motion plans invalid as the previously collision-free space might now be occupied by the moved objects.
Therefore, a motion planner cannot assumes configurations remain valid between each consecutive task.

Sampling-based motion planners (SBPs) are a class of robust methods for motion planning~\autocite{kavraki1996_AnalProb}. In contrast, roadmap-based approaches are capable of querying motion plans for a different set of initial and target configurations~\autocite{karaman2011_SampAlgo} by utilising a persistent data structure.
However, since objects are being manoeuvred during consecutive tasks, roadmap-based SBPs need to re-evaluate all of their edge validity because the underlying \Cspace changes in-between each task.
The re-evaluation lowers the effectiveness of its multi-query property since it requires more time to build a sufficient roadmap when compared to other methods~\autocite{burget2016_BI2}.
On the other hand, existing tree-based SBPs have the properties of creating motion plans that inherently have lower cost and take less amount of time
\autocite{lai2018_BalaGlob}.
However, this class of methods needs to create a new motion plan from scratch for every new task.%
Since the manipulator always operates under the same environment (with objects moved by the manipulator itself), there are often many similar structure in-between each planning instance, for example, in consecutive pick-and-place tasks.

\begin{figure}[tb]
    \centering%
    \includegraphics[width=.4\linewidth]{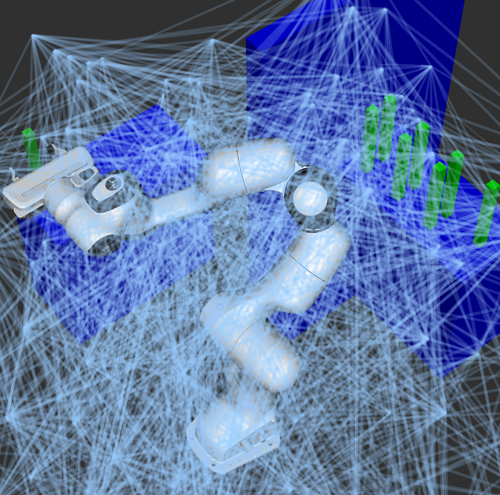}
    \includegraphics[width=.4\linewidth]{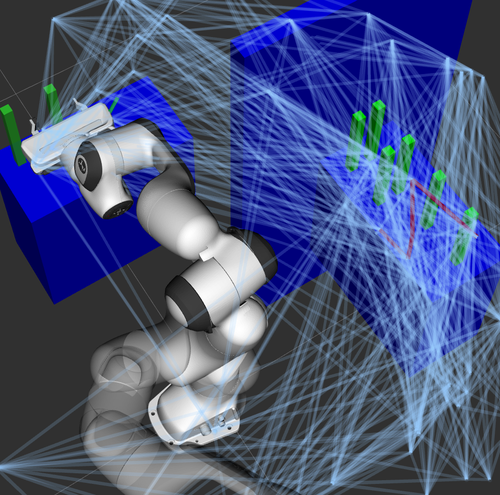}
    \caption{
        A PANDA arm is executing multiple pick-and-place tasks.
        The displayed connected graphs: (left) from a typical Lazy PRM*;
        (right) the experience graph from the proposed \LTR.
        Since the experience graph is built with a tree as a basis, it tends to inherit the tree-based planner property of concentrating edges closer to the initial and target locations.
        \label{fig:comp-resulting-graph}
    }
\end{figure}

The
pick-and-place problem differentiates itself from other planning problems by continuously operating in the same environment with spatial changes that are localised.
Localised changes refer to objects moved due to manipulation by the robot itself. These changes tend to be concentrated within certain regions and in the vicinity of the manipulator's gripper.
However, there is often valuable information collected from past planning problems that remain helpful in subsequent tasks.
Our proposed model remediates this issue by providing a framework that allows the planner to plans for optimal trajectories while utilising past connections previously known to be valid to bootstrap future solutions.

We propose a hybrid planner that tackles the consecutive tasks problem.
Unlike existing multi-query SBPs, our planner does not require any pre-processing graph construction step.
Instead, a rapid tree-based method is used to begin the planning process while simultaneously building a lazy experience graph to capture the connectivity.
There is minimal computational overhead involved since no collisions are performed yet.
In subsequent tasks, the lazy graph will then persist as a sparse structure that captures the previous experience under the same environment.
When the trees are close to the existing experience graph, the graph will be utilised to bootstrap the solution with the shortest path search.
Our planner has comparable performance against other tree-based methods in completely new environments.
However, in subsequent tasks, our planner utilises the lazy experience graph in a hybrid manner combining tree-based and graph-based approaches, often resulting in faster initial solutions and lower length cost.

\section{Related Works}
Sampling-based motion planners take a probabilistic approach to the planning problem.
There exist works that attempt to address the problem of being efficient in planning within the same environment.
For example, PRM~\autocite{kavraki1996_ProbRoad} performs planning within the same environment by saving previous requests in a graph-like structure.
The star-variants of SBPs often denotes their \emph{asymptotic optimality} nature in regards to minimising some cost functions.
Learning techniques like modelling the success probability~\autocite{lai2021rrf} or with a MCMC sampling scheme~\autocite{lai2020_BayeLoca} are possible approaches to improve performance.
Multiple tree or graph-like structures can also be utilised for efficient exploration.
For example, bidirectional trees~\autocite{kuffner2000_RRTcEffi} or multi-tree approach~\autocite{lai2018_BalaGlob} for object picking~\autocite{littlefield2016evaluating} to speed up the planning process by exploring different regions using a pre-computed database.
Combining tree-based and roadmap-based approaches had been explored in~\autocite{plaku2005sampling} which creates multiple trees in parallel but sacrifices solution speed due to its space-filling approach.

Since obstacles within the world-space would be altered during the course of the consecutive tasks problem, experience-based planners like PRM cannot directly reuse its roadmap for subsequent tasks.
Lazy-PRM*~\autocite{hauser2015_LazyColl} was designed to reduce the number of collision checking within its graph, but it can be altered to suit the need of the pick-and-place domain.
It lazily evaluates the validity of edges within the graph which saves computational time.
Elastic roadmaps~\autocite{yang2010elastic} had been used in mobile manipulators, which reuse manipulator planning in different 2D spatial locations. However, the approach does not benefit in fixed-frame manipulators.
E-graph~\autocite{chitta2013graphs} focuses on utilising heuristics to stores previous solution trajectories into a structure to accounts for environment changes. LPA$^*$~\autocite{koenig2004lifelong} maintain a structure after the first search, and reuses previous search tree that are identical to the new one.
The underlying changes in workspace can also be modelled as subset constrain manifold to aid solving subsequent tasks~\autocite{hauser2010multi}.

Consecutive tasks problem typically arises in Task and Motion Planning %
which can encapsulate the execution of tasks as skills~\autocite{lai2020_RobuHier} via learning state transitions~\autocite{lai2022_MEP}.
Techniques that focus on improve sampling efficiency can be modified to restrict the sampling region~\autocite{gammell2018_InfoSamp} or use a learned sampling distribution~\autocite{lai2020_LearPlan,ichter2018_LearSamp,sartoretti2019_PRIMPath,lai2021diffSamp,lai2021plannerFlows} to formulate a machine learning approach to learn from previous experience.
An experience-based database can help to improve planning efficiency.
For example, the
Lightning~\autocite{berenson2012_RoboPath} and Thunder~\autocite{coleman2015experience} framework uses some database structure to store past trajectories, but does not generalise solutions in-between trajectories.

\section{Lazy Tree-based Replanner}

In this section, we begin by formulating the consecutive tasks with objects manipulation problem under
optimal motion planning.
Then, we conceptualise the Lazy Tree-based Replanner framework and discuss the use of the Lazy Experience Graph.
Finally, we will analyse the algorithm tractability and completeness of the proposed algorithm.

\subsection{Formulation}

\newcommand{\task}{\tau}
\newcommand{\Task}{\mathrm{T}}
\newcommand{\loc}{\mathbf{x}}
\newcommand{\obj}{\Theta}

\begin{definition}[Object Manipulation Task]\label{def:pick-and-place}
    An object manipulation task $\task_i$ is a tuple $(\obj_{\task_i}, \loc_{\text{target},i})$ composed of an object $\obj_{\task_i}$ in the workspace $\omega$, and a target coordinate $\loc_{\text{target},i}\in\mathbb{SE}(3)$%
    .
    Let $\loc(\obj_{\task_i})\in\mathbb{SE}(3)$ denotes the current coordinate and orientation of the object $\obj_{\task_i}$.
    Then, an object manipulation task $\task_i$ refers to moving $\obj_{\task_i}$ from its initial pose $\loc(\obj_{\task_i})$ to its target pose $\loc_{\text{target},i}$ within the time budget.
\end{definition}

The repeated tasks scenario refers to a manipulator completing several instances of tasks (\cref{def:pick-and-place}) while avoiding collision with the environment.
Note that while the environment is static, the manipulator will modify the underlying \Cspace throughout the consecutive tasks due to the moved objects in $\omega$.
In the following, $\sigma:[0,1]\to C_\text{free}$ is a parameterised trajectory where $\sigma(0)$ and $\sigma(1)$ refers to the start and end of the trajectory in the configuration space.

\begin{problem}[Consecutive Manipulation Tasks Problem]\label{problem:pap-task}
    Given the set of configuration space $C \subseteq \mathbb{R}^d$, the set of free space $C_\text{free} \subseteq C$, a set of objects manipulation tasks $\Task = \{\task_1, \ldots, \task_n\}$, and a function $f_q\colon \omega \to C_\text{free}$ that maps any given object $\obj_{\task_i}$ from its current pose to a reachable and feasible grasp configuration.
    Formally, find a trajectory $\sigma$ such that all $\task_i\in\Task$ are completed according to~\cref{def:pick-and-place} using $f_q$ for grasping each $\obj_{\task_i}$, where $\forall\, \theta \in [0, 1], \sigma(\theta) \in C_\text{free}$.
\end{problem}

\begin{problem}[Optimal Planning]\label{problem:optimal-task}
    Let $\Gamma(C_\text{free})$ denotes the set of all possible trajectories in $C_\text{free}$.
    Given $C$, $C_\text{free}$, $\Task$ with specified task execution order, and a cost function $\mathcal{L}: \sigma \to [0, \infty)$;
    find a solution trajectory $\sigma^*$ that incurs the lowest cost.
    Formally, find trajectory $\sigma^*$ such that
    \begin{equation}
        \mathcal{L}(\sigma^*) = \min_{ \sigma \in \Gamma(C_\text{free})} \mathcal{L}_c(\sigma)
    \end{equation}
    and it satisfies~\cref{problem:pap-task}.
\end{problem}

\subsection{Lazy Tree-based Replanner}

We propose Lazy Tree-based Replanner (\LTR)---an incremental, asymptotic optimal sampling-based planner capable of rapidly replans for executing consecutive tasks.
\LTR unifies tree-based and roadmap-based methods by utilising the rapid sampling procedure with bidirectional trees whilst using a \emph{lazy experience graph} to captures the connectivity in \Cspace for future tasks.
Our target domain---consecutive object manipulation planning---is inherently a multi-query problem.
However, unlike typical multi-query planners, \LTR avoids pre-processing overhead by incrementally constructs its lazy-experience graph using a single-query motion planner.
When \LTR is first planning under a new environment, the procedure is similar to that of a single-query SBP.
\LTR first utilises bidirectional trees~\autocite{klemm2015_RRTCFast} to creates an initial motion plan, which is a state-of-the-art asymptotic optimal planner that often outperforms other tree-based planners.
During the planning procedure, \LTR captures the connectivity of \Cspace with a lazy experience graph similar to that of a \LazyPRM.
The graph is for capturing the connectivity in the current environment, and the validity of the edges will not be evaluated yet.
In subsequent tasks, \LTR outperforms other methods by bootstrapping solution with its experience graph.
Rapid tree-based methods are used to expands outward from its initial and target configurations.
Then, when the tree structure is close to its previous lazy experience graph, it will perform a lazy shortest path search within the graph to connect both trees.
Since the lazy graph captures \Cspace connectivity from its previous tasks, most connections will be valid even if there is a modification in the \Cspace due to the moved objects.
As a result, \LTR can rapidly replan for subsequent tasks, e.g. in consecutive pick-and-place tasks, by utilising its experience graph.

\begin{algorithm}[bt]%
    \caption{\LTR in Consecutive Pick-and-Place tasks} \label{alg:pap-task}
    \KwIn{$\Task, {\epsilon_n}, \delta_t$ (time budget per task)}

    \Fn{$\FnPlanTrajectory(\mathcal{G}, q_\text{init}, q_\text{target})$}{
        $\mathcal{T}_\text{initial}, \mathcal{T}_\text{target} \gets $ initialise trees with $q_\text{init}$ and $q_\text{target}$ \;
        $\mathcal{G}'\gets (V=\{\; q_\text{init}, q_\text{target} \;\}, E=\emptyset)$;~$\sigma\gets\emptyset$ \;
        \While{within time budget $\delta_t$}{
            \For{$\mathcal{T} \in \{ \mathcal{T}_\text{initial}, \mathcal{T}_\text{target} \}$}{
                $v_\text{new}, e_\text{new} \gets \texttt{GrowTree}(\mathcal{T}, {\epsilon_n}, \mathcal{T}_\text{initial}, \mathcal{T}_\text{target})$ \label{alg:pap-task:grow-tree}\;
                $\mathcal{G}' \gets$ add $e_\text{new}$ as validated edge; add $v_\text{new}$ and connect to all $\{v'\in\mathcal{G}' \given \dist{v' - v_\text{new}} \le {\epsilon_n}\}$ \label{alg:pap-task:saves-to-G'}\;
                \If{$\mathcal{T}_\text{initial}$ connects to $\mathcal{T}_\text{target}$}
                {$\sigma \gets$ path between $\mathcal{T}_\text{initial}$ and $\mathcal{T}_\text{target}$ \;}
                \For{$v\in\mathcal{G}$ {\normalfont{\textbf{where}}} $\dist{v - v_\text{new}} \le {\epsilon_n}$ \label{alg:pap-task:for-v-in-G:start}}{
                    \If{can connect from $v_\text{new}$ to $v$}{
                        \Comment{Found a path from $\mathcal{T}$ to $\mathcal{G}$}
                        $\mathcal{T} \gets$ add $v$ to tree and rewires \label{alg:pap-task:for-v-in-G:end} \;
                    }
                }
                \If{$\sigma$ is empty and $\mathcal{T}_\text{initial}$, $\mathcal{T}_\text{target}$ connects $\mathcal{G}$ \label{alg:pap-task:if-T-T-connected:start}}{
                    $\sigma \gets$ get solution with shortest path search \;
                    \label{alg:pap-task:if-T-T-connected:end}\Comment{can return here for fast sol.}
                }
            }
        }
        $\mathcal{G} \gets$ merge $\mathcal{G}$ with $\mathcal{G}'$ \;
        \Return $\sigma$ \;
    }

    $\mathcal{G}\gets (V=\emptyset, E=\emptyset)$  \label{alg:pap-task:main-start}\Comment*[r]{lazy experience graph}
    \For{task $\task_i \in \Task$ \label{alg:pap-task:loop-start}}{
        $q_\text{current} \gets$ current manipulator configuration \;
        $\FnPlanTrajectory(\mathcal{G}, q_\text{current}, f_q(\loc(\obj_{\task_i})))$ \;
        pickup object $\obj_{\task_i}$ \;
        $\FnPlanTrajectory(\mathcal{G}, f_q(\loc(\task_i)), f_q(\loc_{\text{target},i}))$ \;
        place object $\obj_{\task_i}$ \label{alg:pap-task:loop-end}\;
    }
\end{algorithm}

\begin{figure*}[t!]
    \centering%
    \includegraphics[width=.35\linewidth]{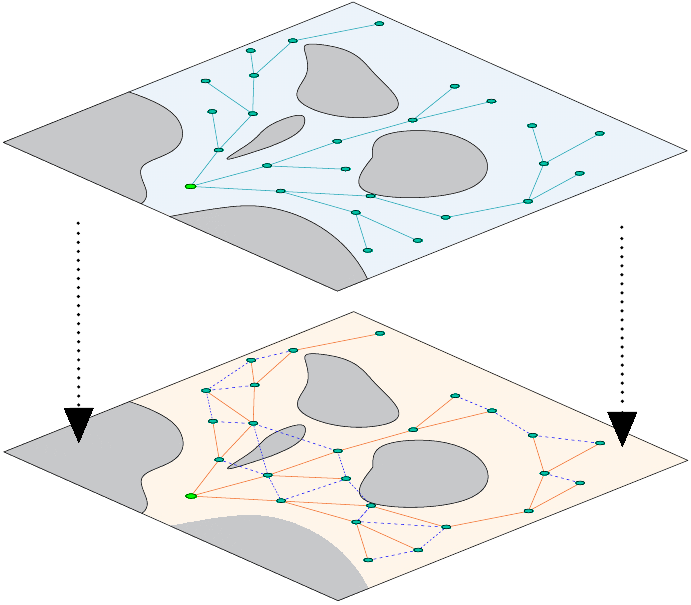}
    \hfill
    \includegraphics[width=.6\linewidth]{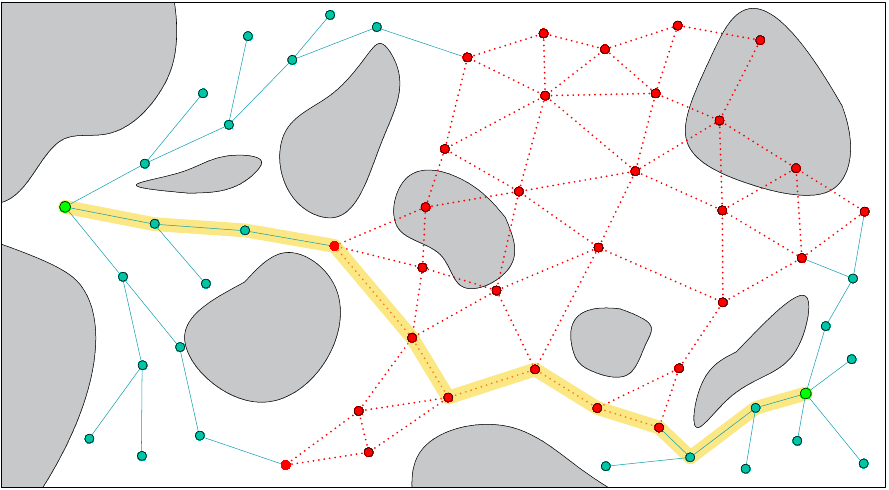}
    \caption{
        An overview of the \LTR algorithm. Grey blobs, circle markers and line segments represent obstacle space, nodes and edges, respectively.
        (\textbf{Left})
            When \LTR grows its trees (top layer) from its initial configuration (green node), it simultaneously builds a lazy experience graph (bottom layer).
            Solid orange edges in the bottom layer retain the checked validity from its original tree structure, whereas the blue dotted edges are added without checking validity yet (hence there are some invalid connections in the example given).
        (\textbf{Right})
            Illustration of \LTR using its previously built lazy experience graph (red graph) to speed up the solution.
            The two cyan trees on either side are built for the current pick-and-place task, and when those trees are in close proximity to the experience graph, it can return a fast solution by performing a shortest path graph-search at the connected nodes (resulting in yellow solution trajectory).
            Note that the simultaneously built experience graphs for this task are not shown (i.e. the orange graph at the bottom left layer) for clarity.
            The red (past experience) and orange (newly built experience) graphs will be merged for subsequent tasks.
        \label{fig:factorio-environment-subgraph}
    }
\end{figure*}

\subsection{Implementation}
\Cref{alg:pap-task} illustrates \LTR's overall algorithmic aspect.
\Cref{alg:pap-task:main-start} begins the consecutive pick-and-place tasks problem by first initialising its persistent lazy experience graph $\mathcal{G}$.
Incoming tasks are processed in lines \ref{alg:pap-task:loop-start} to \ref{alg:pap-task:loop-end}, where the execution order of tasks are not of the scope of this problem.
Each planning request is sent to the $\FnPlanTrajectory$ subroutine, where $f_q\colon \omega \to C_\text{free}$ is a function that maps from object to a possible grasp configuration.
Notice that the underlying \Cspace structure changes (i.e. changing collision geometry) after each \emph{pickup} and \emph{place} operation due to the object being attached to the manipulator and moved objects in $\omega$.

$\FnPlanTrajectory$ subroutine takes two configurations as its planning input, along with the persistent graph that saves its prior experience.
When $\mathcal{G}$ is empty, \LTR behaves similar to that of a bidirectional RRT* as lines~\ref{alg:pap-task:for-v-in-G:start} to~\ref{alg:pap-task:if-T-T-connected:end} would not be entered.
While \LTR is growing its trees, it also saves the created vertices and edges information in a temporary graph $\mathcal{G}'$ (\cref{alg:pap-task:saves-to-G'}) which would be merged with $\mathcal{G}$ at the end of a task.
There are no additional computation overheads when building $\mathcal{G}'$ except that of the nearest neighbour search (NNS) in a k-d tree structure since collision checking is not performed yet (Left of \cref{fig:factorio-environment-subgraph}).
The search can be further optimised out by utilising information from $\texttt{GrowTree}$ (\cref{alg:pap-task:grow-tree}) since a typical tree-based planner also uses NNS, and both planners operate on the same set of vertices.
Similar to existing bidirectional tree approaches (i.e., Alg. 1 in ~\autocite{klemm2015_RRTCFast}), \texttt{GrowTree} subroutine
samples a random configuration and expands the current tree $\mathcal{T}$ while attempting to connects towards the other tree.
When the NNS found some $v_\text{new}$ within $\epsilon_n$ of $\mathcal{G}$ during the tree expansion, \LTR will attempts to connect $\mathcal{T}$ with $\mathcal{G}$ (\crefrange{alg:pap-task:for-v-in-G:start}{alg:pap-task:for-v-in-G:end}).
When both trees are connected to $\mathcal{G}$, \LTR can bootstrap a solution by performing a shortest path search in $\mathcal{G}$ at the two connected vertices.
Edges will be lazily validated in the typical Lazy PRM* fashion and saved for subsequent search~\autocite{hauser2015_LazyColl}.
If the shortest path returns a valid solution at the two connected vertices, then \LTR has successfully found a solution that connects from the initial configuration, through $\mathcal{T}_\text{initial}$, $\mathcal{G}$ and $\mathcal{T}_\text{target}$, to the target configuration (\cref{alg:pap-task:if-T-T-connected:start,alg:pap-task:if-T-T-connected:end}).
In~\cref{fig:factorio-environment-subgraph}, the right illustration provides an overview of the above scenario.

\LTR follows the approach in~\autocite{karaman2011_SampAlgo} which uses the connection radius $\epsilon_n$ as a function of the number of node $n\in\mathbb{N}$, where $\epsilon_n = \epsilon(n) := \gamma (\log{n}/n)^{1/d}$, $\gamma > \gamma^* = 2(1+1/d)^{1/d}(\mu(C_\text{free})/\xi_d)^{1/d}$, $\xi_d$ is the volume of the unit ball in $d$-dimensional Euclidean space, and $\mu(X)$ denotes the Lebesgue measure of a set $X$.
In practice, the lazy experience graph built by \LTR tends to inherit the tree-based planners' property of concentrating more edges closer to the initial and target configurations (\cref{fig:comp-resulting-graph}).
While such a graph is not as diverse as that of a \LazyPRM, it is, in fact, more beneficial under a wide range of manipulator and environment setups since most pickup and place tasks will be concentrated at some central location.
As a result, the lazy experience graph helps to bootstrap solutions that often exhibit lower costs than \LazyPRM.

\subsection{Algorithmic tractability}\label{sec:analysis}

\LTR uses experience graph from past plannings to explores \Cspace, which is guaranteed to help the current planning if its prior knowledge is viable in the current settings.
Furthermore, \LTR is probabilistic complete and will asymptotically converge its solution to the optimal one.
In the following, we will use $V_d(R)$ to denotes the $d$-dimensional volume of a hypersphere of radius $R$, and $\mathcal{V}(q)$ to denote the visibility set of $q$ which represents the region of $C_\text{free}$ visible from some $q\in C_\text{free}$.

\begin{definition}[Connected free space]\label{def:connected-freespace}
    $C'_\text{free} \subseteq C_\text{free}$ is
    said to be
    a connected subset of free space if $\forall m \in \mathbb{N}, m > 2, \exists \varepsilon  > \epsilon_n$ for $n \ge m$ such that
    \begin{equation}
        \label{eq:visibility_in_C'free}
        \forall q \in C'_\text{free},~\mu(\mathcal{V}(q)) \ge V_d({\varepsilon}).
    \end{equation}
\end{definition}
\vspace{2mm}
\Cref{def:connected-freespace} states that each configuration in $C'_\text{free}$ contains at least ${\varepsilon}$ radius of surrounding free space which will eventually surpass the shrinking radius $\epsilon_n$ when $n \ge m$, where $m$ is environment specific.

\begin{theorem}[Joining of Lazy Experience Graph] \label{thm:join-lazy-experience-graph}
  Let free space $C_\text{free} \subseteq C$ be a bounded set.
  Consider $C'_\text{free}\subseteq C_\text{free}$ to be a connected subset of the free space $C_\text{free}$ according to~\cref{def:connected-freespace}.
  Given that $\mathcal{G}\neq \emptyset$.
  If $\mathcal{T}_\text{initial}, \mathcal{T}_\text{target}, \mathcal{G} \subseteq C'_\text{free}$,
  $\mathcal{T}_\text{initial}$ connects $\mathcal{G}$ and $\mathcal{T}_\text{target}$ connects $\mathcal{G}$ with probability one.

\end{theorem}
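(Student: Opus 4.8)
The plan is to reduce the statement to a Borel--Cantelli argument applied separately to each tree. Since $\mathcal{G}\neq\emptyset$ and $\mathcal{G}\subseteq C'_\text{free}$, fix a vertex $g\in\mathcal{G}$; it suffices to show that $\mathcal{T}_\text{initial}$ almost surely acquires a vertex within $\epsilon_n$ of $g$ that is collision-free connectable to $g$, because the connection test in \crefrange{alg:pap-task:for-v-in-G:start}{alg:pap-task:for-v-in-G:end} then joins the tree to $\mathcal{G}$. The identical argument for $\mathcal{T}_\text{target}$, together with the fact that a countable intersection of probability-one events again has probability one, yields the result. Boundedness of $C_\text{free}$ guarantees $\mu(C_\text{free})<\infty$, so the sampling density is well defined and strictly positive on $C'_\text{free}$.

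First I would record the coverage of $C'_\text{free}$ by the growing tree. Because the sampling distribution has full support on $C_\text{free}$ and $C'_\text{free}$ is connected in the sense of \cref{def:connected-freespace}, the bidirectional expansion inherits the coverage property established in \autocite{karaman2011_SampAlgo}: with probability one the tree places vertices arbitrarily close to every point of the connected component containing its root, in particular near $g$. The role of \cref{def:connected-freespace} here is to ensure this coverage is unobstructed: each $q\in C'_\text{free}$ sees a visibility set of measure at least $V_d(\varepsilon)$ with $\varepsilon>\epsilon_n$ for all $n\ge m$, so a straight-line extension toward a nearby sample remains valid.

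Next, fix $n\ge m$ and let $A_n$ be the event that the $n$-th sample falls in the connectable neighbourhood $B(g,\epsilon_n)\cap\mathcal{V}(g)$. Using $\varepsilon>\epsilon_n$ together with the visibility bound of \cref{def:connected-freespace}, I would lower-bound the measure of this neighbourhood by a fixed fraction of $V_d(\epsilon_n)$, so that $\Pr[A_n]\ge c\,\epsilon_n^d/\mu(C_\text{free})$ for some constant $c>0$. Substituting $\epsilon_n=\gamma(\log n/n)^{1/d}$ gives $\Pr[A_n]\ge c'\log n/n$, whence $\sum_{n\ge m}\Pr[A_n]=\infty$. Since the samples are drawn independently, the second Borel--Cantelli lemma implies $A_n$ occurs infinitely often almost surely; any such event supplies a tree vertex within $\epsilon_n$ of $g$ whose edge to $g$ is collision-free, so $\mathcal{T}_\text{initial}$ connects $\mathcal{G}$ with probability one.

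The main obstacle is the volume lower bound in the third step: \cref{def:connected-freespace} controls the total measure of $\mathcal{V}(q)$ but not its shape, so I cannot simply assert that $B(g,\epsilon_n)$ lies inside $\mathcal{V}(g)$. I would handle this by working with the intersection $B(g,\epsilon_n)\cap\mathcal{V}(g)$ and arguing that, because $\varepsilon>\epsilon_n$ and $g$ lies in the interior of $C'_\text{free}$, a ball of radius comparable to $\epsilon_n$ about $g$ is visible, yielding the required $\epsilon_n^d$-order connectable volume. A secondary subtlety is that the steered vertex produced by \texttt{GrowTree}, rather than the raw sample, must land in this neighbourhood; this follows once the nearest existing tree vertex is within a single steering step of $g$, which is again a consequence of the coverage established in the second step.
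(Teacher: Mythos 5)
Your argument is correct and reaches the theorem's conclusion, but it takes a genuinely more quantitative route than the paper's own proof. The paper forms the connectable region $S$ as the intersection of the $\epsilon_n$-bounded visibility neighbourhoods of the \emph{entire} tree and of the \emph{entire} graph, uses \cref{def:connected-freespace} only to conclude that $S$ has non-zero measure, and then asserts that the coverage property of the bidirectional tree gives $\lim_{n\to\infty}\mathbb{P}(q_n\in S)=1$. You instead fix a single $g\in\mathcal{G}$, lower-bound the measure of $B(g,\epsilon_n)\cap\mathcal{V}(g)$ by a constant multiple of $\epsilon_n^d=\gamma^d\log n/n$, and apply the second Borel--Cantelli lemma to the independent sampling events. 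This buys something real: since $\epsilon_n\to 0$, the connectable region shrinks with $n$, so ``positive measure, hence eventually sampled'' is not actually immediate --- one needs exactly the divergence of $\sum_n \log n/n$ that your argument supplies, and which the paper's displayed limit (the probability that the single $n$\textsuperscript{th} sample lands in $S$) glosses over. Two caveats on your own weak points: first, \cref{def:connected-freespace} controls only the measure of $\mathcal{V}(g)$, not its shape, so the cleanest way to get the $\epsilon_n^d$-order bound is not to assert a visible ball about $g$ but to use star-shapedness of the visibility set about its centre together with boundedness of $C_\text{free}$ (the copy of $\mathcal{V}(g)$ scaled toward $g$ by $\epsilon_n/\mathrm{diam}(C_\text{free})$ lies in $B(g,\epsilon_n)\cap\mathcal{V}(g)$ and has measure of order $\epsilon_n^d$); second, your observation that the \emph{steered} vertex rather than the raw sample must land near $g$ is a genuine subtlety the paper does not acknowledge, and your fix (wait for the a.s.\ finite time at which the tree covers a neighbourhood of $g$, then note that discarding finitely many of the $A_n$ does not affect ``infinitely often'') is the right one.
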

\begin{proof}
    From \cref{alg:pap-task} lines~\ref{alg:pap-task:for-v-in-G:start} to~\ref{alg:pap-task:if-T-T-connected:end}, we add each new node from $\mathcal{T}_\text{initial}$ that are reachable and within ${\epsilon_n}$ radius to $\mathcal{G}$.
    The event of joining the tree $\mathcal{T}_\text{initial}$, and $\mathcal{G}$ occurs if a newly sampled configuration $q_\text{new}$ lies within a region that is visible and connectable to both $\mathcal{T}_\text{initial}$ and $\mathcal{G}$.
    The connectable region $S$ is given by
    \begin{equation}
        S=
        \bigcup_{q \in \mathcal{T}_\text{initial}} \mathcal{V}_{\epsilon_n}(q)
        \cap \bigcup_{q \in \mathcal{G}} \mathcal{V}_{\epsilon_n}(q)
    \end{equation}
    where
    \begin{equation}
        \mathcal{V}_{R}(q):=\set{q_i \in \mathcal{V}(q) \given \norm{q_i-q} \le R}
    \end{equation}
    denotes the visibility set of $q$ bounded by the hypersphere of radius $R$.
    The entire experience graph and the tree will occupies
    $\volof{\mathcal{G}}$ and $\mu^{\mathcal{T}_\text{initial}}$ amount of volume in $C'_\text{free}$ where
    \begin{equation}
        \volof{\mathcal{G}} = \bigcup_{q \in \mathcal{G}} \mu(\mathcal{V}_{\epsilon_n}(q))
    ~~~~\text{and}~~~~
        \volof{\mathcal{T}_\text{initial}} = \bigcup_{q \in \mathcal{T}_\text{initial}} \mu(\mathcal{V}_{\epsilon_n}(q)).
    \end{equation}
    Eq. \Cref{eq:visibility_in_C'free} states that each $q$ has at least $V_d({\varepsilon}) > V_d({\epsilon_n})$ free space to be connected with surrounding configurations with ${\epsilon_n}$ connection radius.
    If the lazy experience graph $\mathcal{G}$ is non-empty (from past planning instances), \cref{eq:visibility_in_C'free} further ensures that the connectable space from $\volof{\mathcal{G}}$ and $\volof{\mathcal{T}_\text{initial}}$ has non-zero volume, and hence non-zero probability of begin sampled by a uniform sampler.
    Since the tree $\mathcal{T}_\text{initial}$ is growing in $C'_\text{free}$, the tree will eventually reach all visible configurations in $C'_\text{free}$ as the number of nodes approach infinity (see~\autocite{klemm2015_RRTCFast}).
    Therefore, the probabily that $\mathcal{T}_\text{initial}$ joins with $\mathcal{G}$ is given by
    \begin{equation}
        \lim_{n\to\infty}
        \mathbb{P}(
            q_n \in S
        ) = 1,
    \end{equation}
    where $q_n$ is the $n$\textsuperscript{th} sample of configuration for $\mathcal{T}_\text{initial}$.
    The same proof can be applied to the joining of $\mathcal{T}_\text{target}$ and $\mathcal{G}$.
\end{proof}

\begin{table*}[t!]
    \begin{minipage}{0.75\linewidth}
    \centering
        \caption{
            Numerical results for the Panda, Jaco and TX90 arm scenarios ($\mu \pm \sigma$).
            \label{table:numerical-results}
        }
        \resizebox{\linewidth}{!}{%
        \begin{tabular}{@{}cc@{}ccccccc@{}}
        \toprule
         &  &  & \multicolumn{2}{c}{\textbf{Panda arm}} & \multicolumn{2}{c}{\textbf{Jaco arm}} & \multicolumn{2}{c}{\textbf{TX90 arm}} \\ \cmidrule(l){4-9}
         &  \multicolumn{2}{c}{\# Task}  & \multicolumn{1}{c}{Time (s)} & \multicolumn{1}{c}{Cost} & \multicolumn{1}{c}{Time (s)} & \multicolumn{1}{c}{Cost} & \multicolumn{1}{c}{Time (s)} & \multicolumn{1}{c}{Cost} \\ \midrule
        \multirow{4}{*}{LTR*} & \multirow{2}{*}{Pick} & \multicolumn{1}{c|}{$1^\text{st}$} & 2.36 ± 1.42 & 9.13 ± 3.76 & 12.75 ± 6.73 & 8.72 ± 3.85 & 11.73 ± 8.91 & 14.91 ± 31.77 \\
         &  & \multicolumn{1}{c|}{$8^\text{th}$} & 0.38 ± 0.25 & 10.17 ± 4.16 & 3.17 ± 3.75 & 6.92 ± 2.76 & 5.12 ± 3.73 & 9.94 ± 10.70 \\ \cmidrule(l){2-9}
         & \multirow{2}{*}{Place} & \multicolumn{1}{c|}{$1^\text{st}$} & 1.23 ± 1.14 & 8.45 ± 2.97 & 15.16 ± 8.63 & 9.15 ± 4.92 & 12.76 ± 7.21 & 13.64 ± 28.78 \\
         &  & \multicolumn{1}{c|}{$8^\text{th}$} & 0.18 ± 0.16 & 9.91 ± 0.58 & 4.75 ± 4.76 & 7.31 ± 3.75 & 4.9 ± 4.91 & 15.17 ± 8.53 \\ \midrule
        \multirow{4}{*}{\LazyPRM} & \multirow{2}{*}{Pick} & \multicolumn{1}{c|}{$1^\text{st}$} & 0.91 ± 1.26 & 22.91 ± 15.82 & 9.28 ± 7.26 & 23.72 ± 18.87 & 10.67 ± 10.75 & 51.26 ± 35.73 \\
         &  & \multicolumn{1}{c|}{$8^\text{th}$} & 1.31 ± 0.36 & 23.86 ± 23.83 & 2.86 ± 6.82 & 17.82 ± 14.47 & 3.76 ± 7.81 & 29.77 ± 19.67 \\ \cmidrule(l){2-9}
         & \multirow{2}{*}{Place} & \multicolumn{1}{c|}{$1^\text{st}$} & 1.83 ± 2.58 & 24.73 ± 22.18 & 8.96 ± 6.17 & 22.27 ± 20.21 & 9.93 ± 11.38 & 48.75 ± 46.73 \\
         &  & \multicolumn{1}{c|}{$8^\text{th}$} & 0.85 ± 0.41 & 20.83 ± 17.75 & 3.83 ± 8.91 & 19.51 ± 12.79 & 4.78 ± 8.37 & 22.29 ± 27.76 \\ \midrule
        \multirow{4}{*}{\RRTC} & \multirow{2}{*}{Pick} & \multicolumn{1}{c|}{$1^\text{st}$} & 1.47 ± 2.49 & 9.86 ± 4.78 & 8.47 ± 7.74 & 7.1 ± 13.85 & 8.18 ± 9.74 & 14.86 ± 6.09 \\
         &  & \multicolumn{1}{c|}{$8^\text{th}$} & 2.11 ± 2.25 & 10.18 ± 4.17 & 9.17 ± 10.07 & 8.11 ± 10.33 & 8.74 ± 9.13 & 13.73 ± 7.83 \\ \cmidrule(l){2-9}
         & \multicolumn{1}{l}{\multirow{2}{*}{Place}} & \multicolumn{1}{c|}{$1^\text{st}$} & 1.32 ± 0.98 & 8.91 ± 4.16 & 11.37 ± 8.74 & 9.84 ± 9.74 & 7.99 ± 7.99 & 10.70 ± 6.74 \\
         &  & \multicolumn{1}{c|}{$8^\text{th}$} & 1.28 ± 2.1 & 9.17 ± 5.83 & 7.36 ± 9.36 & 8.76 ± 12.57 & 9.36 ± 8.67 & 12.85 ± 11.76 \\ \midrule
        \multirow{4}{*}{E-Graph} & \multirow{2}{*}{Pick} & \multicolumn{1}{c|}{$1^\text{st}$} & 2.76 ± 1.01 & 17.80 ± 10.27 & 13.58 ± 6.85 & 28.28 ± 11.58 & 13.86 ± 5.86 & 44.18 ± 29.18 \\
         &  & \multicolumn{1}{c|}{$8^\text{th}$} & 1.77 ± 2.92 & 15.91 ± 16.09 & 7.58 ± 11.53 & 23.46 ± 16.95 & 15.72 ± 8.19 & 55.28 ± 30.98 \\ \cmidrule(l){2-9}
         & \multicolumn{1}{l}{\multirow{2}{*}{Place}} & \multicolumn{1}{c|}{$1^\text{st}$} & 2.69 ± 1.38 & 15.26 ± 11.58 & 15.18 ± 7.91 & 28.66 ± 15.85 & 10.99 ± 11.27 & 39.70 ± 26.87 \\
         &  & \multicolumn{1}{c|}{$8^\text{th}$} & 1.82 ± 3.27 & 12.39 ± 8.11 & 9.91 ± 11.28 & 19.68 ± 13.12 & 9.36 ± 8.67 & 43.58 ± 25.33 \\ \bottomrule
        \end{tabular}
        }
    \end{minipage}\hfill%
    \begin{minipage}{0.235\linewidth}
        \centering
        \includegraphics[width=\linewidth]{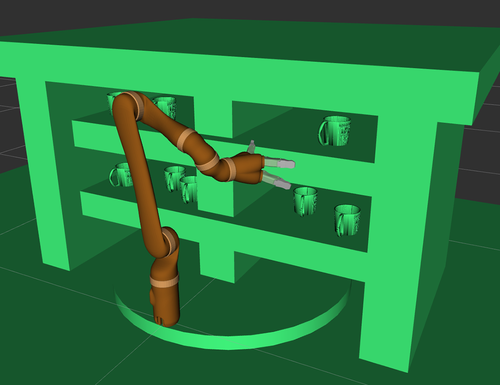}\vspace{1mm}
        \includegraphics[width=\linewidth]{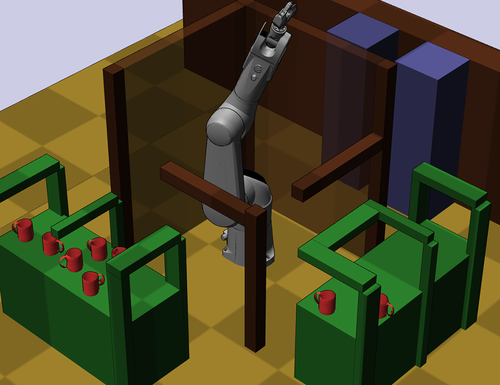}
        \captionof{figure}{
            (Top) Jaco and (bottom) TX90 arm scenarios.
            \label{fig:jaco-and-tx90}
        }
    \end{minipage}
\end{table*}

\LTR attains probabilistic completeness as its number of uniform random samples approaches infinity.

\begin{theorem}[Probabilistic Completeness] \label{thm:probabilistic-complete}
    \LTR inherits the same probabilistic completeness of bidirectional \RRT in each pick-and-place task.
    Let $q_{\text{initial}}^{\task_i}$ and $q_{\text{target}}^{\task_i}$ denotes the initial and target configuration provided in each task $\task_i \in \Task$, and $V_{t}^{\task_i}$ denotes the set of configurations reachable from $q_{\text{initial}}^{\task_i}$  through connected edges for $\task_i$.
    Given that a solution exists for each pair $q_{\text{initial}}^{\task_i}$ and $q_{\text{target}}^{\task_i}$, \LTR is guaranteed to eventually find a solution.
    Formally,
    \begin{equation}
        \forall \task_i \in \Task, \lim_{t\to\infty}\mathbb{P}(
            \{V_{t}^{\task_i}
                \cap
            \{q_{\text{target}}^{\task_i}\}
        \ne\emptyset\})=1.
    \end{equation}
\end{theorem}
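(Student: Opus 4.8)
The plan is to reduce the completeness of \LTR to the already-established probabilistic completeness of the bidirectional \RRT that drives its tree-expansion component, via a monotonicity (coupling) argument on the reachable set $V_t^{\task_i}$. The key preliminary observation is that within a single call to \texttt{PlanTrajectory} the \Cspace is fixed: the collision geometry changes only \emph{between} the \emph{pickup} and \emph{place} operations in \cref{alg:pap-task}, not during one planning invocation. It therefore suffices to prove completeness for one task $\task_i$ in a static environment, since the argument then holds uniformly for every $\task_i\in\Task$ under the standing hypothesis that a solution exists for each pair $(q_{\text{initial}}^{\task_i}, q_{\text{target}}^{\task_i})$.

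First I would decompose the reachable set. Let $V_t^{\text{tree}}$ denote the configurations reachable from $q_{\text{initial}}^{\task_i}$ using \emph{only} the validated edges produced by the \texttt{GrowTree} routine (line~\ref{alg:pap-task:grow-tree}), i.e. ignoring every connection made through the lazy experience graph $\mathcal{G}$. Inspecting \cref{alg:pap-task}, the tree growth in \texttt{GrowTree} is executed on every iteration independently of whether lines~\ref{alg:pap-task:for-v-in-G:start}--\ref{alg:pap-task:if-T-T-connected:end} are entered; the graph connection only ever \emph{adds} reachable vertices (when a lazily validated path through $\mathcal{G}$ is confirmed collision-free) and never deletes an edge that the trees themselves established. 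Hence for every $t$,
\begin{equation}
    V_t^{\text{tree}} \subseteq V_t^{\task_i},
\end{equation}
so the reachable set of \LTR dominates that of its embedded bidirectional tree.

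Next I would invoke the completeness of the embedded planner. The tree component is exactly the bidirectional \RRT of~\autocite{klemm2015_RRTCFast}, which is probabilistically complete: under the hypothesis that a solution exists, the two trees connect---so that $q_{\text{target}}^{\task_i}$ enters the reachable set of $q_{\text{initial}}^{\task_i}$ through validated edges---with probability tending to one as the sample count grows. Combining this with the inclusion above and the fact that every probability is bounded by one yields the squeeze
\begin{equation}
    1 \ge \mathbb{P}\!\left(V_t^{\task_i}\cap\{q_{\text{target}}^{\task_i}\}\ne\emptyset\right)
      \ge \mathbb{P}\!\left(V_t^{\text{tree}}\cap\{q_{\text{target}}^{\task_i}\}\ne\emptyset\right)\xrightarrow[t\to\infty]{}1,
\end{equation}
whence the limit equals one, and since $\task_i$ was arbitrary the claim holds for all tasks.

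The main obstacle I anticipate is justifying that the lazy experience graph cannot \emph{harm} completeness, given that its edges are validated only lazily and that $\mathcal{G}$ was constructed in a possibly different \Cspace. The resolution is precisely the dominance argument: because lazy validation only prunes the stale edges it discovers rather than inhibiting the trees from sampling and extending, the worst case---in which every inherited edge of $\mathcal{G}$ turns out invalid in the current environment---degenerates gracefully to plain bidirectional \RRT, for which the bound already holds. A secondary subtlety is the bidirectional bookkeeping: ``reachable through connected edges'' must be read so that, once the initial and target trees meet, the target joins the initial tree's reachable set; this is exactly the connection event analysed in \cref{thm:join-lazy-experience-graph}, and it is what makes the inclusion $V_t^{\text{tree}}\subseteq V_t^{\task_i}$ meaningful in the two-tree setting.
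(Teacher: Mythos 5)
Your proof is correct, but it takes a genuinely different route from the paper's. The paper argues by case analysis on the lazy experience graph: if a solution exists then $q_{\text{initial}}^{\task_i}$ and $q_{\text{target}}^{\task_i}$ lie in a common connected free space $C'_\text{free}$; if part of $\mathcal{G}$ also lies in $C'_\text{free}$ then both trees connect to it by \cref{thm:join-lazy-experience-graph}, and otherwise lines~\ref{alg:pap-task:for-v-in-G:start} to~\ref{alg:pap-task:if-T-T-connected:end} are never entered and \LTR degenerates to plain bidirectional \RRT, whose completeness is then invoked. You instead prove a uniform domination (coupling) statement: the \texttt{GrowTree} step runs unconditionally on every iteration, the graph machinery only ever adds to the reachable set, hence $V_t^{\text{tree}} \subseteq V_t^{\task_i}$ for all $t$ and the limit follows by squeezing against the known completeness of the embedded planner. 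Your argument is arguably tighter in the ``helpful graph'' case: the paper's first branch stops at ``both trees connect to $\mathcal{G}$,'' which does not by itself produce a valid solution, since the bridging path through $\mathcal{G}$ is only lazily validated and $\mathcal{G}$ was built under a possibly different \Cspace, so every candidate path through it could fail validation. Your monotonicity argument makes the conclusion independent of whether $\mathcal{G}$ ever contributes a usable edge, which is exactly the robustness the theorem needs; the paper's version, by contrast, makes explicit the (practically relevant) mechanism by which $\mathcal{G}$ accelerates the connection when it does lie in $C'_\text{free}$. The one point worth stating explicitly in your write-up is that adding vertices of $\mathcal{G}$ into $\mathcal{T}$ with rewiring (\cref{alg:pap-task:for-v-in-G:end}) preserves connectivity of everything already reached, so the inclusion $V_t^{\text{tree}} \subseteq V_t^{\task_i}$ survives the rewire operations; this is the same separation-of-structures observation the paper defers to \cref{thm:asym-same-as-rrt}.
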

\vspace{2mm}
\begin{proof}
    If a solution exists for task $\task_i$, then $q_{\text{initial}}^{\task_i}$ and $q_{\text{target}}^{\task_i}$ must lies in the same connected free space $C'_\text{free}$.
    If part of $\mathcal{G}$ also lies in the the same $C'_\text{free}$, $q_{\text{initial}}^{\task_i}$ and $q_{\text{target}}^{\task_i}$ are guaranteed to connects to $\mathcal{G}$ (\cref{thm:join-lazy-experience-graph}).
    In the event that the existing lazy experience graph does not lies in $C'_\text{free}$ (e.g. first pick-and-place task or substantially modified workspace), \LTR behaves the same as that of a bidirectional \RRT because no connections would be made to $\mathcal{G}$ and \LTR would be reduced to its internal planner's subroutine
    (\cref{alg:pap-task} \cref{alg:pap-task:for-v-in-G:start} to \cref{alg:pap-task:if-T-T-connected:end} would not be entered).
    Therefore, \LTR attains probabilistic completeness and its asymptotic behaviour is same as that of a bidirectional \RRT.
\end{proof}

\LTR uses an experience graph to bootstraps fast solution trajectory; hence, unlike \RRT variants, \LTR does not possess the anytime property of all tree branches always exhibit the current optimal route.
However, \LTR inherits asymptotic optimality where the solution trajectory will converges to the optimal trajectory almost surely.

\begin{theorem}\label{thm:asym-same-as-rrt}
    Let $n^{LTR*}_i$ and $n^{bRRT*}_i$ be the number of uniformly random configurations sampled at iteration $i$ for \LTR and bidirectional RRT* respectively.
    There exists a constant $\phi \in \mathbb{R}$ such that
    \begin{equation}
        \lim_{i\to\infty} \mathbb{E}[ n^{LTR*}_i / n^{bRRT*}_i ] \le \phi.
    \end{equation}
\end{theorem}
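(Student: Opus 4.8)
The plan is to trace every source of uniform randomness in the two algorithms and to show that the only consumer is the shared tree-expansion routine, so that the auxiliary graph machinery in \LTR contributes no additional uniform samples. First I would isolate the sampling events: in both bidirectional \RRT and \LTR the sole draw of a uniform random configuration occurs inside \texttt{GrowTree} (\cref{alg:pap-task:grow-tree}), which \LTR invokes exactly once per tree per iteration, identically to Alg.~1 of~\autocite{klemm2015_RRTCFast}. The lazy-graph operations---connecting $v_\text{new}$ to the $\epsilon_n$-neighbourhood of $\mathcal{G}$ (\crefrange{alg:pap-task:for-v-in-G:start}{alg:pap-task:for-v-in-G:end}) and the shortest-path query (\crefrange{alg:pap-task:if-T-T-connected:start}{alg:pap-task:if-T-T-connected:end})---act only on vertices already present in the trees or inherited from earlier tasks, and therefore draw zero fresh uniform samples in the current task.

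Given this, I would bound the per-iteration ratio by splitting into two regimes. When the persistent graph $\mathcal{G}$ does not intersect the relevant connected component $C'_\text{free}$ (for instance on the first task or after a heavy perturbation of the workspace), \crefrange{alg:pap-task:for-v-in-G:start}{alg:pap-task:if-T-T-connected:end} are never entered, and \LTR collapses exactly onto bidirectional \RRT, as already argued in the proof of~\cref{thm:probabilistic-complete}; here $n^{LTR*}_i = n^{bRRT*}_i$. When $\mathcal{G}$ is usable, \cref{thm:join-lazy-experience-graph} guarantees that both trees join $\mathcal{G}$ with probability one, so \LTR can bootstrap a solution through the graph without drawing any uniform sample beyond those already spent on tree growth; here $n^{LTR*}_i \le n^{bRRT*}_i$. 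In either case the ratio is almost surely at most one, and taking expectations followed by the limit $i\to\infty$ delivers the statement with $\phi = 1$. Combined with the established asymptotic optimality of bidirectional \RRT, this bounded ratio is what lets optimal-cost convergence transfer to \LTR.

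The step I expect to be the main obstacle is controlling the lazy, as-yet-unvalidated edges of $\mathcal{G}$ together with the cross-task dependence that the persistent graph induces. A failed edge validation forces \LTR to discard a candidate route and revert to tree growth, and I must certify that this fallback spends no more uniform samples than bidirectional \RRT would over the same span---that is, that lazy validation and re-routing are deterministic operations on the existing vertex set rather than disguised sampling. Because the per-task budget $\delta_t$ caps the number of vertices any single task can deposit into $\mathcal{G}$, the number of inherited vertices, and hence the number of lazy validations, is finite, so this overhead cannot compound across tasks into an unbounded factor. Making the argument rigorous requires conditioning on the random contents of $\mathcal{G}$ produced by previous tasks and applying the tower property, so that the almost-sure inequality $n^{LTR*}_i \le n^{bRRT*}_i$ survives marginalisation over that history and the constant $\phi$ can be chosen independently of the task index.
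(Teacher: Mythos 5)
Your proposal is correct and follows essentially the same route as the paper's own proof: both arguments rest on the observation that the only consumer of uniform random samples is the shared tree-expansion routine of \cref{alg:pap-task:grow-tree}, while the experience-graph operations act deterministically on existing vertices and therefore contribute no additional draws, so the tree growth of \LTR asymptotically matches that of bidirectional \RRT. Your version is somewhat more explicit than the paper's---pinning down $\phi = 1$ via the two-regime case split and addressing the cross-task dependence of $\mathcal{G}$---but it is a refinement of the same argument rather than a different one.
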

\vspace{2mm}
\begin{proof}
    As stated in \cref{thm:join-lazy-experience-graph}, $\mathcal{T}_\text{initial}$ and $\mathcal{T}_\text{target}$ has non-zero probability of adding new nodes at each iteration, hence as $i\to\infty$ both trees will have infinite samples to refine its branches.
    In the \LTR algorithm, it uses the experience graph $\mathcal{G}$ to bootstraps solution.
    However, the internal structure of the trees are maintained separately (\cref{alg:pap-task}~\cref{alg:pap-task:grow-tree}), and edges within $\mathcal{G}$ are not added to the trees when using $\mathcal{G}$ to bridge the gap between $\mathcal{T}_\text{initial}$ and $\mathcal{T}_\text{target}$.
    Therefore, the asymptotic behaviour of the growth of $\mathcal{T}_\text{initial}$ and $\mathcal{T}_\text{target}$ in \LTR approaches bidirectional RRT* as $i\to\infty$.
\end{proof}

Although \LTR uses the (possibly non asymptotic optimal compliant) $\mathcal{G}$ to bootstraps the connection between $\mathcal{T}_\text{initial}$ and $\mathcal{T}_\text{target}$, \LTR guarantees the solution it returns will asymptotically converges to the optimal solution.

\begin{theorem}[Asymptotic optimality] \label{thm:optimality}
    Let $\sigma^{*}_i$ be \LTR's solution at iteration $i$, and $c^*$ be the minimal cost for \cref{problem:optimal-task}.
    If a solution exists, then the cost of $\sigma_i$ will converge to the optimal cost almost-surely.
    That is,
    \begin{equation}
        \mathbb{P} \left( \left\{ \lim_{i\to\infty} c(\sigma_i) = c^* \right\} \right) = 1.
    \end{equation}
\end{theorem}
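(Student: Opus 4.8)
The plan is to reduce the statement to the known asymptotic optimality of bidirectional \RRT by showing that, although $\LTR$ may emit an early trajectory through the lazy experience graph $\mathcal{G}$, the trajectory it returns in the limit is dictated entirely by its internally maintained trees $\mathcal{T}_\text{initial}$ and $\mathcal{T}_\text{target}$. The crucial ingredient is the separation established in \cref{thm:asym-same-as-rrt}: the trees grow by their own \RRT rewiring (\cref{alg:pap-task}, \cref{alg:pap-task:grow-tree}), $\mathcal{G}$'s edges are never imported into them, and their sampling count stays within the constant factor $\phi$ of pure bidirectional \RRT. I would therefore treat the embedded trees as a self-contained bidirectional \RRT run and argue that $\LTR$'s returned cost is eventually exactly the cost of that run.

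First I would verify that the trees receive infinitely many samples almost surely. Combining the non-zero per-iteration sampling probability from \cref{thm:join-lazy-experience-graph} with the ratio bound $\mathbb{E}[n^{LTR*}_i/n^{bRRT*}_i]\le\phi$ of \cref{thm:asym-same-as-rrt} gives $n^{LTR*}_i\to\infty$ as $i\to\infty$. Together with the probabilistic completeness of \cref{thm:probabilistic-complete}, this guarantees that $\mathcal{T}_\text{initial}$ and $\mathcal{T}_\text{target}$ connect \emph{directly} (independently of $\mathcal{G}$) at some finite iteration $I_0$, and, since samples are never discarded, they remain connected for all $i\ge I_0$.

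Next I would trace the bookkeeping of $\sigma_i$ in \cref{alg:pap-task}: the graph-bootstrapped branch (\crefrange{alg:pap-task:if-T-T-connected:start}{alg:pap-task:if-T-T-connected:end}) fires only while $\sigma$ is still empty, whereas once the two trees meet, $\sigma$ is overwritten by the direct tree path. Hence for every $i\ge I_0$ the returned $\sigma_i$ is precisely a bidirectional \RRT solution drawn from trees built with the connection radius $\epsilon_n=\gamma(\log n/n)^{1/d}$, $\gamma>\gamma^*$. Invoking the asymptotic optimality of bidirectional \RRT under this radius schedule \autocite{karaman2011_SampAlgo,klemm2015_RRTCFast}, the cost of this tree solution converges to $c^*$ almost surely. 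The trivial feasibility bound $c(\sigma_i)\ge c^*$ for all $i$ then sandwiches the cost, yielding $\lim_{i\to\infty}c(\sigma_i)=c^*$ with probability one, which is \cref{problem:optimal-task}'s optimum.

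The main obstacle is ruling out that reliance on the possibly non-optimal $\mathcal{G}$ traps $\LTR$ at a suboptimal cost; this is exactly where the separation argument of \cref{thm:asym-same-as-rrt} does the work, since $\mathcal{G}$ only accelerates the \emph{initial} solution and does not perturb the trees' \RRT rewiring, so the asymptotic cost is governed solely by the trees. A secondary technical subtlety is that $\sigma_i$ is overwritten rather than monotonically improved at the moment the trees first connect, so $c(\sigma_i)$ need not be monotone and may even jump upward there. I would emphasise that only the tail matters: for all $i\ge I_0$ the returned cost coincides with the almost-surely convergent bidirectional \RRT cost, so the limit is insensitive to any earlier, larger, graph-bootstrapped values.
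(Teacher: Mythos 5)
Your proposal is correct and follows essentially the same route as the paper: it reduces the claim to the asymptotic optimality of bidirectional \RRT via the separation of the internal trees from $\mathcal{G}$ established in \cref{thm:asym-same-as-rrt}, and observes that once $\mathcal{T}_\text{initial}$ and $\mathcal{T}_\text{target}$ connect directly the returned solution no longer depends on the graph bridge. Your write-up is somewhat more careful than the paper's --- in particular the explicit finite connection time $I_0$, the bookkeeping showing the graph-bootstrapped branch only fires while $\sigma$ is empty, and the remark on non-monotonicity of $c(\sigma_i)$ --- but these are refinements of the same argument rather than a different one.
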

\vspace{2mm}
\begin{proof}
    Since \LTR uses experience graph that are not optimised against the current task's $q_\text{initial}$ and $q_\text{initial}$, the solution trajectory $\sigma_i$ that \LTR returns at iteration $i$ does not satisfy the asymptotic criteria specified in~\autocite{karaman2011_SampAlgo}.
    However, even after $\mathcal{T}_\text{initial}$ and $\mathcal{T}_\text{target}$ are joined with $\mathcal{G}$ (i.e. after obtaining fast initial solution), \LTR only adds $v\in\mathcal{G}$ within ${\epsilon_n}$ to $\mathcal{T}$ (\cref{alg:pap-task}~\cref{alg:pap-task:for-v-in-G:start}) sequentially and with adequate rewire procedure (\cref{alg:pap-task}~\cref{alg:pap-task:for-v-in-G:end}) within the internal tree structure.
    Therefore, as the asymptotic behaviour of \LTR approaches that of a bidirectional \RRT (\cref{thm:asym-same-as-rrt}) after $\mathcal{T}_\text{initial}$ and $\mathcal{T}_\text{target}$ connects without the bridging of $\mathcal{G}$, the cost of $\sigma_i$ that \LTR returns converges to the optimal cost almostly surely.
\end{proof}

\section{Experiments}
We experimentally evaluate the versatility and performance of \LTR by testing for speed and cost in multiple environments.
Since different types of manipulators exhibit distinct \Cspace, which will impact the underlying structures, we conduct our experiment on three different types of manipulators to assess \LTR's robustness.
The simulated environments are depicted in~\cref{fig:comp-resulting-graph,fig:jaco-and-tx90}.
The manipulators used are the PANDA, Kinova Jaco and TX90 arm with an attached PR2 gripper.
We first describe the environments and experimental setup, then provide a comparison against other state-of-the-art sampling-based planners under the same environment.

\begin{figure*}[t!]
    \centering%
    \includegraphics[width=.9\linewidth]{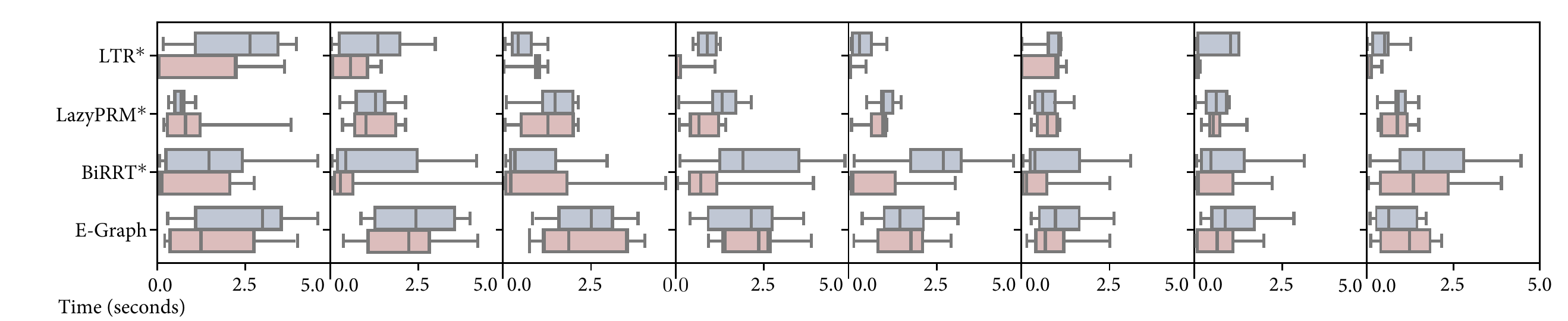}
    \includegraphics[width=.9\linewidth]{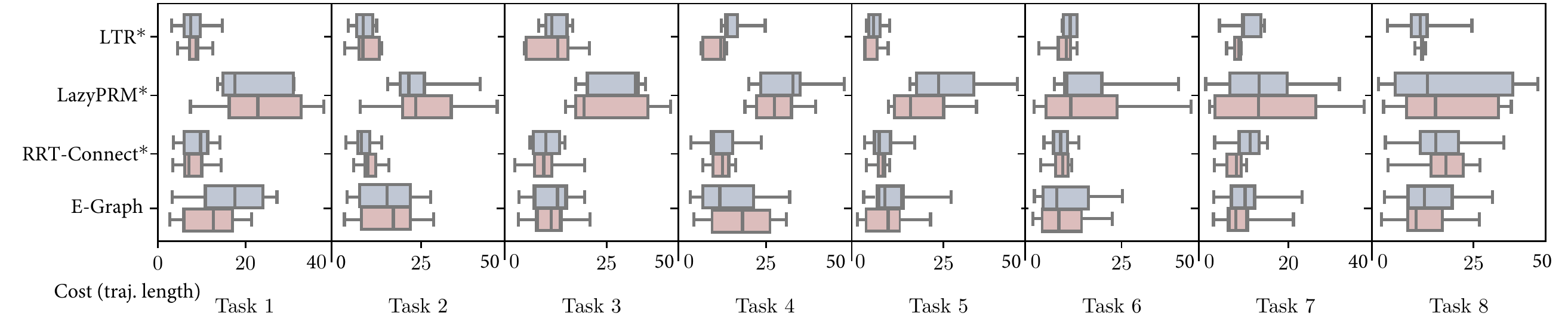}
    \caption{
        Experimental results obtained from the PANDA consecutive pick-and-place tasks, to illustrate the evolution of \LTR's performance after perfroming multiple tasks.
        The blue and red boxes correspond to the pickup and place task, respectively.
        Each planner has to plan a trajectory to pickup the object and place at the target location to complete each task.
        Boxplots of
        (\textbf{Top})
             time to obtain initial solution and
        (\textbf{Bottom})
            the final cost (length) of the trajectory.
        \label{fig:exp}
    }
\end{figure*}

The scenario's objective is to manoeuvre all of the objects $\obj_{\task_i}$ from their randomised starting location to their target location $\loc_{\text{target},i}$.
The starting and target regions are randomly generated on some predefined surface for each scenario.
There exist eight objects in each scenario, often on the surface of tables or cupboards. The objective of each task is to transfer an object from a starting location to a target location while avoiding collisions.
We implement
\LTR under the OMPL
framework, and integrates it with the MoveIt and Klampt simulator in Python \cite{lai2021SbpEnv}.
The experiment is composed of 8 tasks, each of which requires a planning procedure for picking up and placing an object at some pre-generated random location.
The manipulator will need to go back and forth through a constantly changing region, resulting in 16 motion trajectory execution.
\LTR, \RRTC~\autocite{klemm2015_RRTCFast}, \LazyPRM~\autocite{hauser2015_LazyColl} and E-Graph~\autocite{chitta2013graphs} are tested in this experiment.
The experiment for each planner is repeated 10 times each, as a total of 160 trajectories for each scenario.
The lazy data structure of each planner is kept between each task, except bidirectional RRT* since it does not allow invalidating edges.
The data structure of each planner is initially empty, except for E-Graph, where we first bootstrap its data structure with 5 uniformly distributed goals.
\begin{figure}[tb]
    \centering
    \includegraphics[width=.95\linewidth]{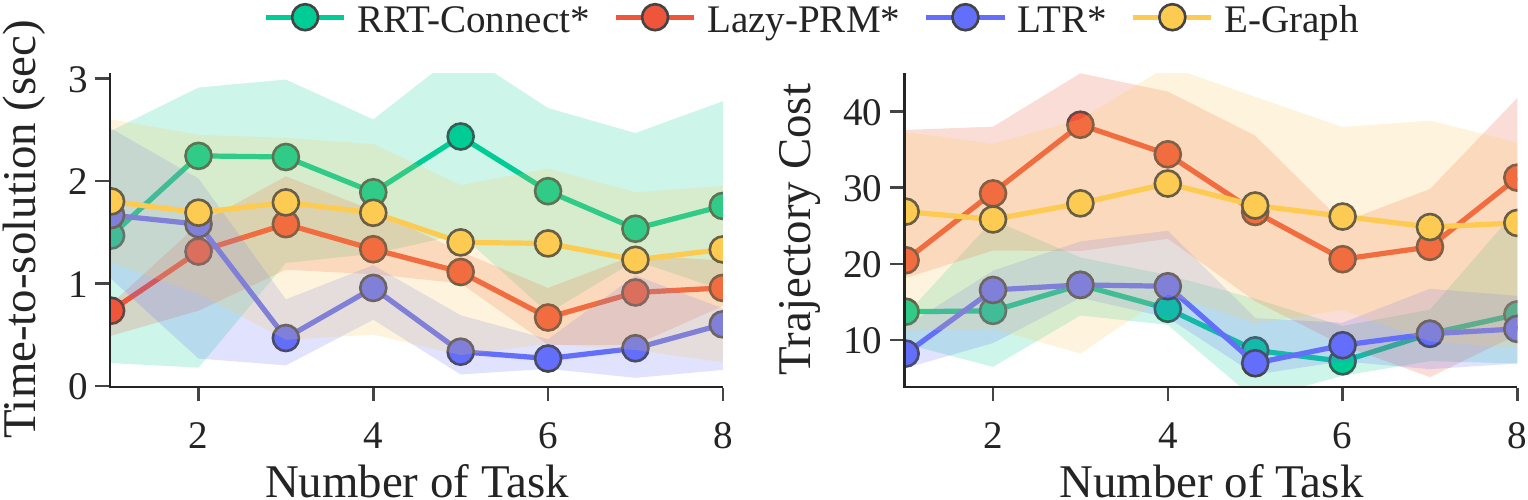}
    \caption{Experimental results for the multiple object picking tasks in PANDA problem setup
    (line and shaded region depict median $\pm 25\%$ quartiles).
    \label{fig:panda-line-plot}
    }
\end{figure}
Experimental results obtained is shown in~\cref{table:numerical-results}.
Results from the 1\textsuperscript{st} and 8\textsuperscript{th} tasks are included to highlight the performance gain by exploiting the consecutive nature of the problem.
\LTR and \LazyPRM operate at a similar speed on their 1\textsuperscript{st} task and consistently improve their \emph{time} metric (time required to obtain first solution) by the 8\textsuperscript{th} task via reusing their stored graph; whereas \RRTC needs to recompute its entire tree from scratch and does not improve its speed.
E-Graph requires an initial bootstrap as it uses 3D Dijkstra and inverse kinematics to facilitate planning, which in average requires more than 30 seconds to obtain an initial solution in the PANDA environment with an empty graph. E-Graph gradually improves its time-to-solution as it stores more solution, but with a degraded solution cost. Moreover, E-Graph cannot provides any completeness and optimal convergence guarantee as it operates in workspace.
Solutions returned by \RRTC always consists of lower cost due to its tree-based nature of expanding configurations directly from the initial and target configurations, which is also true for \LTR since \LTR also bootstraps its lazy experience graph with trees. As a result, \LTR behaves like a hybrid in-between the other two, inheriting the benefit of graph reusability and low-cost solution nature.
In~\cref{fig:exp}, we visually display results from the PANDA arm as boxplots of time and cost across the evolutions of tasks 1 to 8.
The x-axis is divided into eight sections; each corresponds to picking up an object and placing it at the dedicated location.
\LTR behaves similar to \RRTC on the first task when the lazy experience graph has not been built yet (top of~\cref{fig:exp}).
This is because the subroutine as depicted in~\cref{alg:pap-task} will not be entered on the first task (empty graph $\mathcal{G}$).
However, in subsequent tasks, \LTR can return a much faster solution by utilising the lazy experience graph to bootstrap an initial solution.
The result is similar to that of a \LazyPRM where it can reuse its persistent data structure in subsequent tasks.
\Cref{fig:panda-line-plot} depicts the same phenomenon with x-axis indicating \LTR's benefit when the number of task increases.
Unlike \LazyPRM, \LTR uses a tree-based method to explore and connects to its persistent graph.
Tree-based methods tend to return a trajectory with a much lower cost due to the locality when growing the trees outwards from their initial and target configurations.
For example, \LazyPRM had spent a substantial amount of time in the PANDA task creating nodes that are not critical to the optimal trajectory (\cref{fig:comp-resulting-graph}). In contrast, tree-based methods can directly bias its tree towards the start or target configuration.
Therefore, \LTR can achieve a much lower cost overall than \LazyPRM (Bottom of~\cref{fig:exp}).

Overall the proposed method \LTR consistently outperforms other baselines on consecutive pick-and-place tasks, where it has the benefit of rapid planning speed of the bidirectional RRT* and the reusability of \LazyPRM.
Interestingly, while there exist certain overheads for \LTR when it checks for connections between the trees $\mathcal{T}_\text{initial}$, $\mathcal{T}_\text{target}$ and the graph $\mathcal{G}$, the quantitative results illustrate that the overall speeds-up overweight the overhead.
The hybrid \LTR grows a tree rapidly to the existing experience graph for a fast solution, and it will keep refining the tree when there is still time budget remained to refine its solution.
In a life-long setting where the robot arm needs to execute an indefinitely long sequence of tasks, the lazy experience graph might become arbitrarily dense.
An overly dense graph helps to ensure completeness and optimality guarantees; however, performing graph search in such a graph might degrade performance.
One possible strategy to mitigate the issue is via witness set~\autocite{li2016asymptotically} which helps to maintain a sparse data structure by pruning nodes from the graph.
This strategy only ensures probabilistic $\delta$-robust completeness; however, it can help control the lazy experience graph's growth over the life-long setting.

\section{Conclusion}
We proposed a hybrid planner as a method for planning specifically in consecutive object manipulation tasks.
The challenge lies in the modified configuration space in-between each task due to the moved objects and the need to rapidly replans while achieving low-cost trajectory.
Our proposed method utilises a lazy experience graph to keep track of past data while performing tree-based planning to rapidly expands the search for low-cost trajectory.
As a result, we are able to retain valuable information in-between each task, while achieving low trajectory cost that is commonly available in a tree-based planner.
Such a hybrid method capitalise the rapid expansion property from tree-based SBPs, and the reusability from the roadmap-based SBPs.
In the future, we can better exploit the change-of-obstacle locality by learning a sampling distribution that concentrates in ``active regions'',
which
can then focus more on regions that require replanning.

\printbibliography

\end{document}